\newcommand{\gobble}[1]{}
\newcommand{\gobblexor}[2]{#2} % \gobblexor{full}{short}
\newcolumntype{R}[2]{%
    >{\adjustbox{angle=#1,lap=\width-(#2)}\bgroup}%
    l%
    <{\egroup}%
}
\newtheorem{corollaryenv}{\bf Corollary}
\newtheorem{lemmaenv}{\bf Lemma}
\newtheorem{exampleenv}{\bf Example}
\newtheorem{definitionenv}{\bf Definition}
\newtheorem{remarkenv}{\bf Remark}
\def\markatright#1{\leavevmode\unskip\nobreak\quad\hspace*{\fill}{#1}}
\newenvironment{proofsketch}{\par\emph{\quad Proof sketch:}}{\markatright{$\Box$}\par}
    \NewDocumentEnvironment{definition} { o }
     {
      \IfNoValueTF{#1} {\vspace{0.05in}\begin{definitionenv}\em}{\vspace{0.05in}\begin{definitionenv}[#1]\em}
     }
     {\end{definitionenv}\vspace{0.05in}}
\newenvironment{lemma}{\vspace{0.05in}\begin{lemmaenv}\em}{\end{lemmaenv}\vspace{0.05in}}
\newenvironment{corollary}{\vspace{0.05in}\begin{corollaryenv}\em}{\end{corollaryenv}\vspace{0.05in}}
\newenvironment{example}{\vspace{0.05in}\begin{exampleenv}}{\end{exampleenv}\vspace{0.05in}}
\newenvironment{remark}{\vspace{0.05in}\begin{remarkenv}}{\end{remarkenv}\vspace{0.05in}}
\newcommand{\powSet}[1]{\raisebox{.15\baselineskip}{\large\ensuremath{\wp}}({#1})}
\renewcommand{\emptyset}{\varnothing}
\definecolor{propcol}{rgb}{0.05,0.45,0.05}
\definecolor{autocol}{rgb}{0.45,0.05,0.05}
\definecolor{goalcol}{rgb}{1.0,0.6,0.0}
\newcommand{\goal}{\textcolor{goalcol}{\footnotesize\bf\textsf{G}}\xspace}
\def\Nat{{\mathbb{N}}}
\def\Re{{\mathbb{R}}}
\def\ifs{{\mathcal{I}}}
\def\H{{\mathcal{H}}}
\def\X{{\mathcal{X}}}
\def\is{{\iota}}
\def\iS{\scalebox{1.3}{\textiota}}
\providecommand{\LTL}{\textsc{ltl}\xspace}
\providecommand{\ndet}{{\rm ndet}}
\providecommand{\UOmega}{\ensuremath{\overline{\Omega}}}
\providecommand{\bnfeqq}{\ensuremath{\;\coloncolonequals\;}}
\providecommand{\bnfvert}{\ensuremath{\,\;\vert\;\,}}
\providecommand{\prop}{\ensuremath{\textsc{prop}}\xspace}
\providecommand{\lfalse}{\ensuremath{\scalebox{0.90}{$\bot$}}}
\providecommand{\ltrue}{\ensuremath{\scalebox{0.90}{$\top$}}}
\providecommand{\lstart}{\ensuremath{\scalebox{0.9}{\textbf{start}}}\xspace}
\providecommand{\limplies}{\ensuremath{\Rightarrow}}
\providecommand{\liff}{\ensuremath{\Leftrightarrow}}
\providecommand{\lnext}{\ensuremath{\raisebox{0.75pt}{\scalebox{0.82}{$\bigcirc$\thinspace}}}}
\providecommand{\luntil}{\ensuremath{\thinspace\mathsf{U}}}
\providecommand{\leventu}{\ensuremath{\scalebox{0.90}{$\diamondsuit$\thinspace}}}
\providecommand{\lalways}{\ensuremath{\scalebox{0.90}{$\square$\thinspace}}}
\providecommand{\lunless}{\ensuremath{\thinspace\mathsf{W}}}
\providecommand{\prp}[1]{\textcolor{propcol}{\mathrm{\bf #1}}}
\providecommand{\autom}[1]{\ensuremath{{\color{autocol}\mathcal{#1}}}}
\providecommand{\initt}{\text{initial time}}
\providecommand{\Light}{\raisebox{-.4pt}{\includegraphics[scale=0.4]{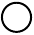}}\xspace}
 \providecommand{\Dark}{\raisebox{-.4pt}{\includegraphics[scale=0.4]{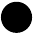}}\xspace}
\providecommand{\Indet}{\raisebox{-.4pt}{\includegraphics[scale=0.4]{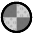}}\xspace}
\title{Limits of specifiability for sensor-based robotic planning tasks}
\author{Ba\c{s}ak Sak\c{c}ak \and Dylan A. Shell \and Jason M. O'Kane\thanks{
The first author is with the Center for Ubiquitous Computing, University of Oulu, Finland. 
The last two are with the
Department of Computer Science and Engineering, Texas A\&M University, College
%Station, TX, USA.}}
Station, TX, USA.  {\tt basak.sakcak@oulu.fi} \& {\tt\{dshell, jokane\}@tamu.edu}. The first author was supported by Academy of Finland (project CHiMP 342556).}}
\begin{document}

\maketitle

\begin{abstract}
There is now a large body of techniques, many based on formal methods, for describing and realizing complex robotics tasks, including those involving a variety of rich goals and time-extended behavior.  This paper explores the limits of what sorts of tasks are specifiable, examining how the precise grounding of specifications\,---that is, whether the specification is given in terms of the robot's states, its actions and observations, its knowledge, or some other information---\,is crucial to whether a given task can be specified. While prior work included some description of particular choices for this grounding, our contribution treats this aspect as a first-class citizen: we introduce notation to deal with a large class of problems, and examine how the grounding affects what tasks can be posed. The results demonstrate that certain classes of tasks are specifiable under different combinations of groundings.
\end{abstract}

\section{Introduction}
\vspace*{-3pt}
\gobblexor{A great deal of work in robotics involves determining \emph{how} to get robots
to do things\,---choosing the steps to be taken and enacting them, either in
sequence or as function of a variety of conditions.}{Much work in robotics involves determining \emph{how} to get robots
to do things\,---choosing the steps to be taken, enacting them in
sequence or under suitable of conditions.}
In concert, there is also distinct and growing body of work dedicated to
expressing \emph{what} we wish the robot to do, viz.\ describing the objectives
of robot system.  Common techniques include describing goal regions in state
space, providing reward functions associated to states (or states and actions),
and rich languages and logics that enable formal specifications of sequences.
In the present paper, we examine the fundamental limits of these sorts of task
descriptions.
Such questions are essential but often overlooked: When one is unable to
characterize the demands to be placed upon the robot, one should not expect the desired behavior to be elicited.

\gobblexor{Figure~\ref{fig:coastal} depicts an example in which a robot moves
through an environment, aided by a sensor that can detect landmarks in known
locations.}{Figure~\ref{fig:coastal} depicts an example of a robot moving
through an environment, aided by a sensor that can detect landmarks in known
locations.}
Such a robot might be tasked with localizing itself, though there is some
subtlety in what exactly constitutes success: Must the robot come to know its
own state and maintain that knowledge through the rest of its execution ---in
the language of temporal logic, `eventually always' localized--- or it is
sufficient for the robot to know its own state periodically ---`always
eventually' localized?
Of the \gobble{robot's }two trajectories shown\gobble{in the figure}, both satisfy the latter
requirement, but only Trajectory~{\textsf A} satisfies the former.
This simple example shows both the need for immaculate precision in task
specification, lest the elicited behaviour mismatch the intent, and the
potential for expressive representations like temporal logic to capture the
relevant subtlety.

To that end, this paper introduces a formal structure in which questions about
specifiability of tasks for robots may be understood in a precise way.  This
structure begins from the complete, infinite traces of a robot's interaction
with its environment across time, including the evolution of the robot's
states, along with the robot's actions 
(which influence this state evolution)
and observations (which may partially reveal the state evolution to the robot).
In this abstract sense, a task is simply a collection of such traces that
correspond to completion of the objective.

\begin{figure}
    \centering
    \vspace*{4pt}
    \includegraphics[trim={0cm 0cm 4cm 0},clip,scale=0.75]{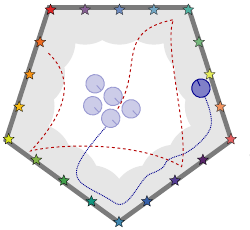}
    % Change 2nd argument of this following clip to adjust legend vertical positioning:
    \includegraphics[trim={4.0cm -0.4cm 0cm 0},clip,scale=0.65]{figures/coastal-navigation-world.pdf}
    \vspace*{-4pt}
    \caption{%
        A representative problem domain.
        Starting from an initial position with some uncertainty, a robot with
        imperfect motion dynamics navigates around an environment whose
        boundary contains landmarks that may be detected and distinguished from
        one another. 
        The essence of the setting is the interplay of the robot's states,
        actions, and observations.
        This paper addresses the ability of various specification approaches to
        describe tasks such as localization, coverage, or navigation, in this
        sort of setting.
    }
    \label{fig:coastal}
    \vspace*{-1.8em}
\end{figure}

The contribution of this paper is to provide a direct connection between this
very broad concept of robotic tasks and more commonly-utilized specification
languages such as linear temporal logic (\LTL).  We show that the precise set
of atomic propositions used to ground such logical formulae impacts the tasks
that can be specified, in direct and sometimes counter-intuitive ways.
We consider tasks specified in terms of states, in terms of action/observation
histories (generalizing LaValle's history information states \cite{Lav06} to infinite
traces), and so-called information states which refer to sets of possible
states.
The results include a characterization of the existence or non-existence of
tasks specifiable within various subsets of these three example
specification types.

Following a concise review of related work (\S\ref{sec:related}), the
remainder of this paper introduces some preliminary definitions
(\S\ref{sec:prelim}), defines tasks and their posability
(\S\ref{sec:task}) and specifications (\S\ref{sec:spec}).  Upon
that foundation, we prove several results about the limits of specifiability
(\S\ref{sec:table}) before offering concluding remarks
(\S\ref{sec:conc}).

\vspace*{-3pt}
\section{Related work}\label{sec:related}
\vspace*{-3pt}

\gobblexor{
Though recent years have witnessed a surge in interest specifications of tasks
for robots using natural
language~\cite{liu2022langltl,wang2024llm,hu2024deploying,arkin2017contextual,liu2023grounding},
there has been sustained interest in various forms of specifying desired robot behaviors.
Careful study has been devoted to tasks specified as collections of
partially-ordered rules that specify behaviours to which a robot should
adhere~\cite{censi2019liability}, formal contracts that describe desired
interactions between subsystems~\cite{ghasemi2024compositional}, specifications
on the narrative structure of events observed by the robot~\cite{RahSheOKa22},
specifications that blend partially-observable Markov decision process and
temporal logic~\cite{liu2021leveraging}, generalized domain specifications that
apply to large classes of specific tasks with certain forms of shared
structure~\cite{curtis2022discovering}, and even specifications \gobble{of robot
movement }inspired by the study of human dance~\cite{6016594,5991363}.
}
{
Despite a recent surge in interest to describe tasks
for robots using primarily natural
language~\cite{liu2022langltl,wang2024llm,hu2024deploying,arkin2017contextual,liu2023grounding},
there has been sustained interest in a surprisingly wide variety of forms of specification for many years.
Careful study has been devoted to tasks specified as collections of
partially-ordered rules that require adherence~\cite{censi2019liability}, contracts describing desired
subsystem interactions~\cite{ghasemi2024compositional}, specifications
on the narrative structure of events observed by the robot~\cite{RahSheOKa22},
blends of POMDPs and temporal logic~\cite{liu2021leveraging}, and domain
specification generalization across classes of
tasks~\cite{curtis2022discovering}, and even specifications borrowed from the
study of human dance~\cite{6016594,5991363}.
}

From this wide array of options, one common choice for describing the desired
behaviour of a robot\,---borrowed from the formal methods community---\,is \LTL,
a logic particularly suitable for specifying robot tasks with temporally
extended goals.
A large body of robotics research exists that relies on \LTL specifications including within the context of mobile robot navigation~\cite{FaiGirKrePap09},
multi-agent systems~\cite{ulusoy2013optimality,kantaros2022perception}, and
even to describe rules for changing the environment that the robot interacts
with~\cite{bobadilla2012controlling}.
Other temporal logics have also been of interest, including signal temporal
logic because of its suitability for continuous
systems~\cite{liu2023learning,sun2022multi}. 

There have been at least two major efforts to address questions about whether
tasks of interest can be expressed within certain specification schemes.
One approach is based on the study of various fragments of \LTL and their
ability to express certain classes of tasks~\cite{10571628,6942755}.
Another analyzed the expressivity of reward functions for prescribing desired
behavior, including the \emph{reward
hypothesis}~\cite{abel2021expressivity,bowling2023settling}. 
This paper explores a different avenue; it focuses on the grounding of
propositions that appear in a general class of specifications and connects them back to the
basic descriptive model of the robot's interactions with the environment. Consequently, we
discuss cases in which certain groundings prove insufficient to express a
task of interest.
In considering this grounding, our work shares a commonality with other work on 
varying levels of spatial abstraction in the propositions within a temporal
logic specification~\cite{oh2022hierarchical}. 
Whereas other work's focus is on the synthesis of task specifications, our results establish conditions under which varying groundings are sufficient for expressing a particular task.
Likewise, the results are orthogonal to other work that considers the case in
which a duly specified task admits no correct plan for the robot to complete
it~\cite{raman2011analyzing}.

\section{Preliminaries}\label{sec:prelim}
This section lays a foundation for our results by introducing a formal notion of sensor-based robot tasks.

First, some notational niceties.
For a given set $A$, we write $A^{\Nat}$ to denote the set of all (one-sided)
infinite sequences of elements of set $A$.  
%
%Similarly, we write $A^{<\Nat}$ for the set of all finite length sequences of
%elements of $A$, that is, $A^{<\Nat} = \bigcup_{n \in \Nat} A^n$. 
%
For a function $g : A \rightarrow B$, $g^{-1}(b) \subseteq A$ is the preimage
of $b \in B$ under $g$. Furthermore, for $C \subseteq A$, we will use $g(C)$ to denote the image of $C$
under $g$. The powerset of $A$ is denoted by $\powSet{A}$.
When referring to a sequence of tuples we will drop the parentheses whenever it
is clear from the context. For example, a sequence of pairs $( (a_1, b_1),
(a_2, b_2), \dots) \in (A \times B)^\Nat$ will be equivalently denoted $(a_1,
b_1, a_2, b_2, \dots)\gobble{ \in (A \times B)^\Nat}$.

\subsection{Robot transition systems and complete traces} 
%The cornerstone of our formulation is the robot transition system:
Our treatment's cornerstone is the robot transition system:

\begin{definition}[Robot transition system]
    A \emph{robot transition system} is a tuple $R = (X, U, f, h, Y, X_0)$
    consisting~of
    \begin{itemize}
        \item Sets of states $X$, actions $U$, and observations $Y$. 
        \item A transition function $f: X \times U \to
            \powSet{X}\setminus\{\emptyset\}$, where $f(x,u)$ is the set of
            possible states that can be reached when action $u$ is taken by the
            robot in state $x$.
        \item An observation function $h: X \times U \times X \to \powSet{Y}\setminus\{\emptyset\}$,
            where $y \in h(x,u, x')$ is an observation the robot receives after
            taking action $u$ in state $x$ and arriving in state $x'$.
        \item A set of initial states $X_0 \subseteq X$.
    \end{itemize}
\end{definition}

The idea is that our robot interacts with its environment, executing actions and receiving observations.
\gobblexor{Notice that this}{This} model can express both nondeterminism in its state
transitions\,---from state $x$ executing action $u$, the next state $x'$ can be
any of the states in $f(x, u)$---\,and nondeterminism in the sensor information
available to the robot\,---when transitioning from $x$ to $x'$ by executing
action $u$, the observation $y$ received by our robot may be any of the
observations in $h(x, u, x')$.

\begin{example}[Robot transition system]\label{eg:RTS_coastal}
    Recall the example in Figure~\ref{fig:coastal}.
    To model this scenario as a robot transition system, let $X = E \times
    S^1$, in which $E \subseteq \Re^2$ is the environment and $S^1$ represents
    orientation.
    For its actions, the robot can rotate to change its heading and then
    translate forward in each step, so its action space is $U = S^1 \times
    (0,d]$ for some maximum translation $d \in \Re_+$. The state transition
    function $f$ returns the results of this requested motion, including some
    inexactitude in the movements.
    This robot's sensing may be modeled with the observation space
        $Y = \left( \{
            {\includegraphics[trim={0.01cm 0cm 7.8cm 0},clip,scale=1.0]{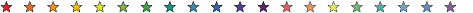}},
            {\includegraphics[trim={1.2cm 0cm 6.6cm 0},clip,scale=1.0]{figures/landmark-stars.pdf}},
            \ldots,
            {\includegraphics[trim={3.17cm 0cm 4.63cm 0},clip,scale=1.0]{figures/landmark-stars.pdf}}
        \} \times \Re^2 \times S^1 \right) \cup \{ \bot \}$,
    in which $h$ reports the identity and relative pose of the nearest
    landmark, if any are within a given maximum sensing range, or $\bot$ if no
    landmarks are in range.
\end{example}

To ensure that tasks \gobblexor{involving interactions}{evolving} over indefinite periods of time
(e.g., maintenance tasks) can be modeled\gobblexor{ correctly,}{,} we are interested in
infinite traces that describe the robot's execution. 

\begin{definition}[Universe and complete trace] A \emph{complete trace} (or, just, a trace) for a robot transition system $R=(X,U,f, h, Y, X_0)$ is an infinite sequence of state, action, and observation triples of the form $\omega = (x_1, u_1, y_1, x_2, u_2, y_2, \dots) \in (X \times U \times Y)^\Nat$ that satisfy $x_1 \in X_0$ and respect $f$ and $h$ such that for all $i= 1, 2, \dots$, $x_{i+1} \in f(x_i, u_i)$ and $y_{i} \in h(x_i,u_i, x_{i+1})$.
    %
    %We denote by $\Omega_R$ the set of all complete traces for $R$.
    We denote by $\Omega_R$ the set of all complete traces for $R$, and $\Omega_R \subseteq (X\times U \times Y)^\Nat \eqqcolon \UOmega_R$, the \emph{universe of traces}.
\label{def:traces}
\end{definition}
\gobblexor{Such traces are `complete' in the sense that they include the full detail of how the robot's execution unfolds, including states, actions, and observations.}{Such traces are `complete' in their full detail of how the \gobble{robot's }execution unfolds: all states, actions, and observations.}

\vspace*{-1.0ex}
\subsection{Condensers}\label{sec:condenser}
\vspace*{-0.8ex}

A central idea in this paper is the importance of reasoning about what can be accomplished with only a limited view of the complete trace $\omega$.  \gobblexor{The next definition makes this concept precise.}{Next, we make this precise.}

\begin{definition}[Condenser]\label{def:condenser}
    For a given robot transition system $R$ and an arbitrary set $A$, a \emph{condenser} for $R$ to $A$ is a function with domain $\Omega_R$ and codomain $A$.
 
\end{definition}

\gobble{
When a function akin to a condenser is defined over the larger domain $\UOmega_R$ instead, it will a termed a \emph{munificent condenser}. Its restriction to $\Omega_R$ yields a condenser.}

The next two examples illustrate the concept of a condenser by introducing important special cases.

\begin{example}[Action/observation-trace]
Let $\H = (U \times Y)^\Nat$ denote the \emph{action/observation-trace space} of a given robot transition system $R$.
The \emph{action/observation-trace condenser} $c_\H: \Omega_R \to \H$ discards the states from each complete trace, leaving only the actions and observations, so that
\vspace*{-1.75ex}
        $$ (x_1,u_1,y_1,x_2, u_2, y_2, \dots) \mapsto (u_1, y_1, u_2, y_2, \dots).%
\vspace*{-1.5ex}$$
This condenser captures the idea that a robot generally will not have direct information about its states, but will have access to history of actions executed and observations received.  
The concept is closely akin to LaValle's history information space~\cite{Lav06}, but expressed over infinite traces.
\end{example}

\begin{example}[State-trace]
    Let $\X = X^\Nat$ denote the 
    \emph{state-trace space} 
    of a given robot
    transition system $R$. The \emph{state-trace condenser} $c_\X:
    \Omega_R \rightarrow \X$ maps a complete trace to the respective
    state-trace by discarding the actions and the observations, such
    that
\vspace*{-1.6ex}
        $$(x_1,u_1,y_1,x_2, u_2, y_2, \dots) \mapsto (x_1, x_2, \dots).%
\vspace*{-0.6ex}$$
\end{example}

The two example condensers so far both have codomains composed of infinite
traces, and both work by simply discarding information from the complete trace.
This is a pattern that will recur, but is not a restriction. \gobble{Indeed, interesting
condensers can be defined so that the codomain is not even a sequence, as in
the following example.

\begin{example}[Reward]
\newcommand{\crew}{c_{\rm rew(\gamma)}}
For a robot transition system $R = (X, U, f, h, Y, X_0)$ let $r: X \times U \times X \rightarrow \Re$ be a reward function which assigns a real valued reward to the triples of $(x_i,u_i,x_{i+1})$. For $R$ and 
a constant $\gamma \in (0, 1)$ define a condenser
    $\crew: \Omega_R \to \Re$ as follows:
\vspace*{-1.6ex}
        $$ (x_1,u_1,y_1,x_2, u_2, y_2, \dots) \mapsto \sum_{i=1}^\infty \gamma^{i-1} r(x_i,u_i,x_{i+1}).%
\vspace*{-0.6ex}$$
This condenser is akin to the notion of discounted return in infinite horizon \gobblexor{reinforcement learning}{RL} or optimal control contexts 
\cite{Ber19}.
\end{example}}

\vspace*{-0.4ex}
\subsection{Information spaces}
\vspace*{-0.4ex}

Beyond $c_\H$ and $c_\X$, other condensers can be defined with codomain $\ifs^\Nat$, in which $\ifs = \powSet{X}$ is called the \emph{information space (I-space)} and its elements are referred to as \emph{information states (I-states)}. A trace of I-states resulting from such a condenser is then called \emph{an I-space trace}, usually denoted $(\is_1, \is_2, \dots)$.
These condensers capture the possibility of uncertainty in the robot's state across time.

\gobblexor{
For such condensers, we can define additional properties depending on their relationship with the true state at a given stage. The next two definitions formalize these relationships. 

\begin{definition}[Soundness of $c$]
    An I-space condenser $c : \Omega_R \rightarrow \ifs^\Nat$ is called \emph{sound} if, for any complete trace $\omega =(x_1,u_1,y_1,\dots) \in \Omega_R$ and any time $i \in \Nat$, the resulting I-state trace $(\is_1, \is_2, \dots)$ has $x_i \in \is_i$.
\end{definition}

Informally, if a condenser $c$ sound, it means that for any complete trace $\omega \in \Omega_R$, no I-state in the corresponding I-state--trace $c(\omega)$ is `missing' any true states: each I-state along the trace contains the respective state of the robot at the appropriate time slot. However, soundness does not exclude additional states that may not be consistent with the %previous 
actions taken and the observations received by the robot. The next definition accounts for this aspect.

\begin{definition}[Tightness of $c$]
    An I-state condenser $c : \Omega_R \rightarrow \ifs^\Nat$ is called \emph{tight} if for all $\omega =(x_1,u_1,y_1,\dots) \in \Omega_R$, and any time $i \in \Nat$, the resulting I-state trace $(\is_1, \is_2, \dots)$ has the following property:
    For any state $z_i \in \is_i$, there exists a complete trace $\omega'= (x'_1,u'_1,y'_1,\dots) \in \Omega_R$ for which $z_i = x'_i$ and $c_\H(\omega) = c_\H(\omega')$.
\end{definition}

It is possible for a condenser to be any combination of sound and tight. However, in this paper, we are particularly interested in the condenser that is both sound and tight. This condenser generalizes the notion of nondeterministic I-spaces introduced by LaValle~\cite{Lav06} to infinite sequences:
}
{
We are particularly interested in the following condenser,
which generalizes the notion of nondeterministic I-spaces introduced by LaValle~\cite{Lav06} to infinite sequences:
}

\begin{example}[Nondeterministic I-state condenser]
Let $F(x_i,u_i,y_i)$ denote the set of possible states at stage $i+1$ conditioned on the state, action, and observation \gobblexor{at stage $i$, so that}{at stage $i$:}
\vspace*{-1.4ex}
\begin{equation}
\label{eq:F_transition}
    F(x_i, u_i, y_i) = \{ x' \in f(x_i, u_i) \mid  y_i \in h(x_i,u_i,x') \}.
\vspace*{-0.8ex}
\end{equation}
The \emph{nondeterministic I-state condenser} $c_{\ndet} : \Omega_R \rightarrow \ifs^\Nat$ maps a complete trace to an I-state--trace, that is,
\vspace*{-1.4ex}
$$(x_1,u_1,y_1,x_2, u_2, y_2, \dots) \mapsto (\is_1, \is_2, \dots)%
\vspace*{-0.8ex}$$
such that, starting from $\is_0 = X_0$, each subsequent $\is_k$ is \gobblexor{defined by}{just}
\vspace*{-1.2ex}
\begin{equation}
\label{eq:is_ndet}
        %\is_{k} = \bigcup\{ F(x', u_k, y_k) \mid x' \in \is_{k-1}\}. DAS: I don't like this big union before a set of sets. 
    \is_{k} = \bigcup\limits_{x \in \is_{k-1}}\!\!\!F(x, u_k, y_k).
\vspace*{-0.8ex}
\end{equation}
\gobble{
By its construction, $c_{\ndet}$ is both sound and tight. Furthermore, it is the unique condenser satisfying both conditions as removing any element from $\is_k$ violates soundness and adding any other element to $\is_k$ violates tightness.  }
\end{example}

\gobblexor{
\smallskip
By its construction, $c_{\ndet}$ is \emph{sound} in that for any complete trace $\omega \in \Omega_R$, no I-state in the corresponding I-state--trace $c(\omega)$ is `missing' any true states.
Further, $c_{\ndet}$ is \emph{tight} in that the I-states it produces include no extraneous states, i.e., those that may not be consistent with the actions taken and the observations received by the robot. 
It is, in fact, the unique condenser satisfying both these conditions.
}

\vspace*{-0.4ex}
\section{Tasks}\label{sec:task}
\vspace*{-0.4ex}

\begin{figure}
\centering
\begin{minipage}{0.35\linewidth}
    \includegraphics[trim={0cm 0 0cm 0},clip,scale=0.75]{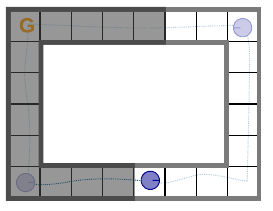}
\end{minipage}
\hfill
\begin{minipage}{0.55\linewidth}
\caption{Partly illuminated environment. The initial state is either bottom left facing east, or
top right facing west (light blue circles). The goal is indicated by \goal.}
    \label{fig:light_dark_env}
\end{minipage}
\vspace{-1.6em}
\end{figure}

\gobblexor{In the context of the formal setting introduced above, a task can be viewed as
simply a collection of complete traces, as defined next.}{Formally in our setting, a task can be viewed as
simply a collection of complete traces, as defined next.}

\begin{definition}[Task]
    A task $T$ for a robot transition system $R$ is a subset of $\Omega_R$, that is, $T \subseteq \Omega_R$.
\end{definition}

A central objective of this paper is to understand the limits of posing and
specifying tasks in ways that do not rely on the full detail of the
complete traces.  The condensers introduced in Section~\ref{sec:condenser}
provide a clean way to accomplish this.  For a given condenser $c$, the
question is whether a task of interest can be properly expressed in the
codomain of $c$, or whether the information lost when applying $c$ obscures
some aspect relevant to the task.  The next definition makes this distinction
precise.

\begin{definition}[Well-posed under $c$]\label{def:wellposed}
    A task $T \subseteq \Omega_R$ is \emph{well-posed under condenser $c$} if %and only if 
    $c^{-1}(c(T)) = T$.  
    In particular, a task $T \subseteq \Omega_R$ is called 
    \begin{enumerate}
        \item \emph{state-trace posable} if $T$ is well-posed under $c_\X$,
        \item \emph{action/observation-trace posable} if $T$ is well-posed under $c_\H$, and
        \item \emph{I-state--trace posable} if $T$ is well-posed under $c_\ndet$.
    \end{enumerate}
\end{definition}

The intuition behind the notion of \emph{well-posed} is, informally, that the codomain of $c$ is `rich enough' to describe the task properly, even in the absence of the details about each trace that are `lost' when applying the condenser $c$. 

In the following, to illustrate the notion of well-posedness under different condensers, we present examples building upon a common robot transition system.
%\begin{example}[Light-dark grid world] 
Refer to the environment in
Figure~\ref{fig:light_dark_env} composed of an illuminated and dark cells that
the robot can be in. The state space $X=E^\Delta \times \mathcal{O}$, in which
$E^\Delta$ is the collection of grid cells and $\mathcal{O}$ are
orientations corresponding to four cardinal directions. The set $U$ of actions, which we assume to be deterministic,
include rotating $90^\circ$ in place and moving one step in the direction that
the robot is facing. The
start states $X_0$ and the goal region $X_G$ are indicated in
Figure~\ref{fig:light_dark_env}, the latter corresponding to location \goal 
with any orientation. The robot is equipped with a light sensor with the set of
observations $Y=\{\Light, \Dark, \Indet\}$ such that $h$ reports
either \Light (light) or \Indet (indeterminate) if the robot is in
a well-lit cell and it reports \Dark (dark) or \Indet
indeterminate otherwise.
%\end{example}

\begin{example}[State-trace poseable task] \label{eg:state_trace_task}
%The task is to visit the goal state infinitely often, resulting in the set
The task is to always keep re-visiting the goal region, resulting in the set
\vspace*{-1.6ex}
\begin{multline}\label{eq:state_poseable_task}
T=\{ \omega \in \Omega_R \mid \text{elements of }  X_G \text{ appear infinitely}\\[-2pt] \qquad\text{many times in } \omega \}.
\end{multline}\vspace*{-3.6ex}

\noindent For each $z=c_\X(w)$ satisfying $\omega \in T$, $c^{-1}(z) \subseteq T$ since any $\omega' \in \Omega_R$ with $c_\X(w')=z$ will be in $T$ by construction. Furthermore, for each $\omega \in T$ there is at least one $z \in c_\X(T)$ such that $\omega \in c_\X^{-1}(z)$ since $c_\X$ is a function.
Thus, $c^{-1}_\X(c_\X(T))=T$ and $T$ is state-trace poseable.
\end{example}

\begin{example}[Action/observation-trace poseable task]\label{eg:ao_poseable}
The task is for the robot to resolve the ambiguity caused by its initialization and the geometry to determine which side of the environment it is on. Observations \Light or \Dark, if received, suffice to break the symmetry forever; therefore
\vspace*{-1.4ex}
\begin{equation}\label{eq:ao_poseable_task}
T = \{ \omega \in \Omega_R \mid \text{\Light or \Dark  appears somewhere in $\omega$ }\}. 
%No D before L 
\end{equation}
\vspace*{-3.6ex}

\noindent Following the same reasoning as in the previous example, one can show that $T$ is well-posed under $c_\H$. 
\end{example}

\begin{example}[I-state--trace poseable task]\label{eg:I-state-not-state-poseable} Consider the task $T$ defined in Equation~\eqref{eq:ao_poseable_task}. For any $\omega=(x_1, u_1, y_1, \dots x_n, u_n, y_n, \dots)$ with $y_n$ being the first occurrence of an observation different than \Indet, the respective I-state--trace $c_\ndet(\omega)=(\is_1, \dots, \is_n, \dots)$ satisfies for all $i \geq n$ that $\is_i$ is a singleton (the dark or the light cell consistent with the actions). 
Then, consider an I-state--trace $s \in c_\ndet(w)$ with $\omega \in T$ and suppose there exists $\omega' \in c_\ndet^{-1}(s)$ with $\omega' \not\in T$. 
This cannot be possible since the I-state corresponding to the observation \Light or \Dark is a singleton indicating that an observation different than \Indet is received and that it uniquely identifies the observation (as \Light/\Dark is impossible from the symmetric state). 
Then, it is true for each $s \in c_\ndet(T)$ that $c_\ndet^{-1}(s) \subseteq T$ and that $T$ is I-state--trace poseable.
The interpretation of this task over I-state--traces is to have, at some point, the robot experience the ambiguity about its possible state vanish.
\end{example}

\begin{example}[I-state--trace poseable task]\label{eg:Istate_task}
Suppose now that, instead of the light detector, the robot has a goal detector such that $h(x,u,x')=1$ if $x \in X_G$ and $0$ otherwise. 
Define the task $T$ as in Equation~\eqref{eq:state_poseable_task} but with $\Omega_R$ incorporating this new sensor. 
This has not affected the basis of the argument made in Example~\ref{eg:state_trace_task}, so it remains state-trace poseable. 
Task $T$ is also action/observation-trace poseable since $y_i=1$ if and only if $x_i \in X_G$.    
Since $T \subseteq \Omega_R$ it respects $f$ and $h$. Then, for any I-state--trace $(\is_1, \is_2,\dots)=c_\ndet(\omega)$ with $\omega \in T$, it is always true that $\is_i \subseteq X_G$ if $y_i=1$ and $x_{i} \in X_G$, owing to Equation~\eqref{eq:F_transition}. 
The preimage $c^{-1}_\ndet((\is_1, \is_2, \dots)) \subseteq T$ since having an $\omega' \in c^{-1}_\ndet((\is_1, \is_2, \dots))$ with $\omega' \not\in T$ implies either that $\omega' \not\in \Omega_R$ or that $c_\ndet$ does not respect $f$ and $h$, neither of which is true.
Therefore, $c^{-1}_\ndet(c_\ndet(T))=T$, and $T$ is I-state--trace poseable.
\end{example}

Indeed, Example~\ref{eg:Istate_task} hints at a general rule about
I-state--trace posablity of a task that is state- and
action/observation-trace poseable, which will be stated in
Lemma~\ref{lem:uv-and-x-couple-i}. We will further analyze the relationships
between different condensers in terms of task posablity in
Section~\ref{sec:subsec_relations}.

\vspace*{-0.4ex}
\section{Specifications}\label{sec:spec}
\vspace*{-0.4ex}

\begin{definition}[Specification]
    A \emph{task specification} $\sigma$ for robot $R$ is some finite length description of a set $\Gamma_\sigma \subseteq \UOmega_R$ for which set membership can be readily computed.
    The task specified by $\sigma$ is $\Gamma_\sigma \cap\, \Omega_R$.
\end{definition}

The reason behind the use of $\UOmega_R$ is that the constraints imposed by
the robot system $R$ need not be expressed by the specification. Those are, in
some sense, obtained for free by running the robot. A specification's job is to
delineate the boundary within the set $\Omega_R$ and is free to be arbitrary
outside that set. Forcing a specification to capture the boundary of 
$\Omega_R$ within $\UOmega_R$ can be onerous and is, ultimately, unnecessary.

\vspace*{-0.4ex}
\subsection{Specification technologies} 
\vspace*{-0.6ex}

Linear Temporal Logic (\LTL) extends classical propositional logic through the
addition of operators \gobblexor{dealing with }{for }time. A formula in \LTL is a finite sequence of symbols expressed over a collection
of Boolean propositions, $\prop$, which we take to be the set of atomic
propositions.  Then, the syntax of formulae in \LTL is given recursively via
the grammar
\vspace*{-0.8ex}
$$\phi \bnfeqq \ltrue \bnfvert \lstart \bnfvert \prp{q} \bnfvert \lnot \phi \bnfvert \psi \lor \varphi \bnfvert  \lnext \phi \bnfvert  \psi \luntil \varphi,%
\vspace*{-0.6ex}$$

\noindent where the $\phi$, $\psi$, and $\varphi$ are \LTL formulae, 
$\ltrue$ is the proposition representing Boolean value `true',
$\lstart$ is the temporal operator only satisfied at the initial time,
atomic $\prp{q} \in \prop$, $\lnot$~denotes negation, $\lor$ disjunction, $\lnext$ the `next' operator, and $\luntil$ the `up until' operator.
This allows derivation % Unsure which I prefer definition or derivation 
of $\lfalse \equiv \lnot \ltrue$,
conjunction $(\phi \land \varphi) \equiv
\lnot(\lnot \phi \lor \lnot \varphi)$ via De~Morgan's law, 
implication 
$(\phi \limplies \psi) \equiv (\lnot \phi \lor \psi)$,
equivalence
$(\phi~\liff~\psi) \equiv \big((\phi~\limplies~\psi)\land(\psi~\limplies~\phi)\big)$,
and three additional temporal operators
`eventually' $\leventu \phi \equiv (\ltrue \luntil \phi)$, 
`always' $\lalways \phi \equiv (\lnot \leventu \lnot \phi)$, and
`unless' $\psi \lunless \varphi \equiv (\psi \luntil \varphi \lor \lalways \psi)$.

Semantics of \LTL formulae is provided, like modal logics, via Kripke structures. 
The standard semantics is defined over infinite sequences: $s = (P_1, P_2, P_3, \dots)$
where $P_i \subseteq \prop$ are the propositions which hold (i.e., equal $\ltrue$) at
time $i$. A sequence $s$ satisfies $\lnext \phi$ at time $i$ when $\phi$ 
evaluates to $\ltrue$ at time $i+1$. Sequence $s$ satisfies $\psi \luntil \varphi$
at time $i$ if there is some moment, either at $i$ or later, for which
$\varphi$ holds, and $\psi$ is $\ltrue$ from $i$ until the step just before then.
We write $\langle s, i \rangle \models \psi$ if and only if the sequence $s$
satisfies $\psi$ at time $i$.

In all that follows, we will consider only robot transition systems where $X$,
$U$, and $Y$ are finite.  \gobblexor{Because we associate an encoding proposition to each
state, and action, and observation, this means $\prop$ will be finite as well.}{This will mean $\prop$ is finite as well.}
Mostly, we use a change in typeface to transfer from $X\sqcup U \sqcup Y$ to their
associated propositions; 
\gobblexor{when an explicit association adds clarity, it'll be
realized via bijection $x \stackrel{\alpha}\mapsto \alpha(x) \coloneqq \prp{x} \in \prop$
(and similarly for actions or observations).}{we'll write $x \stackrel{\alpha}\mapsto \alpha(x) \coloneqq \prp{x} \in \prop$
when an explicit association adds clarity (and similarly for actions or observations).}

\begin{definition}
\label{def:state-trace-ltl}
Given state set $X=\{x', x'', \dots, x^{(m)}\}$, 
a \emph{state-trace \LTL formula} is a formula with $\prop = \{\prp{x'},\prp{x''},\dots, \prp{x^{(m)}}\}$,
with each proposition $\prp{x^{(i)}}=\alpha(x^{(i)})$, 
and with semantics defined in the standard way over infinite sequences of
sets of propositions,  but grounded to sequences 
in $\UOmega_R$ via \gobble{the following munificent condenser,}
$c_{X\prop}: \UOmega_R \rightarrow \powSet{\prop}^\Nat$:
\vspace*{-1.6ex}
$$ (x_1,u_1,y_1,x_2, u_2, y_2, \dots) \mapsto (\{\prp{x_1}\}, \{\prp{x_2}\}, \dots).
\vspace*{-0.6ex}$$
\end{definition}
\vspace*{-0.6ex}

\medskip\noindent
Intuitively, we must bridge the two treatments to allow \LTL with the usual
semantics to represent what is usually modeled.  The idea is that $c_{X\prop}$
connects the space of traces (of Definition~\ref{def:traces}) with
propositional sequences by asserting that the proposition representing state at
a given time holds when that is the state.

\begin{lemma}
\label{lem:state-trace-ltl-is-X}
Every state-trace \LTL formula expresses a state-trace posable task.
\end{lemma}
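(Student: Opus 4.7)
The plan is a short, definition-chasing argument that rests on a single structural observation: the condenser $c_{X\prop}$ that grounds the semantics of a state-trace \LTL formula factors through $c_\X$. More precisely, for any complete trace $\omega=(x_1,u_1,y_1,x_2,u_2,y_2,\dots) \in \Omega_R$, the propositional sequence $c_{X\prop}(\omega)=(\{\prp{x_1}\},\{\prp{x_2}\},\dots)$ is obtained by applying the pointwise map $g:X \to \powSet{\prop}$, $x\mapsto\{\prp{x}\}\ \ (=\{\alpha(x)\})$, componentwise to $c_\X(\omega)=(x_1,x_2,\dots)$. Hence $c_{X\prop}\!\restriction_{\Omega_R}=\bar g\circ c_\X$, where $\bar g$ is the natural lifting of $g$ to sequences. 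This is the key lemma and essentially the only content of the argument; the remainder is bookkeeping.

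Given a state-trace \LTL formula $\phi$, let $T$ be the task it specifies, i.e., $T=\{\omega\in\Omega_R \mid \langle c_{X\prop}(\omega),1\rangle\models\phi\}$. I would then verify the two inclusions that comprise $c_\X^{-1}(c_\X(T))=T$. The inclusion $T\subseteq c_\X^{-1}(c_\X(T))$ holds for any set $T\subseteq\Omega_R$ because $c_\X$ is a function defined on all of $\Omega_R$. For the reverse inclusion, take any $\omega'\in c_\X^{-1}(c_\X(T))$; then there exists $\omega\in T$ with $c_\X(\omega')=c_\X(\omega)$. Applying $\bar g$ to both sides of this equality and invoking the factorization gives $c_{X\prop}(\omega')=c_{X\prop}(\omega)$. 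Since \LTL satisfaction depends only on the propositional sequence, $\langle c_{X\prop}(\omega'),1\rangle\models\phi$, so $\omega'\in T$ (noting $\omega'\in\Omega_R$ by the domain of $c_\X$). This establishes well-posedness under $c_\X$, which is exactly state-trace posability by Definition~\ref{def:wellposed}.

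The step I expect to be the only mildly subtle one is the factorization itself, together with carefully keeping track of the domains: $c_{X\prop}$ is defined as a munificent condenser on $\UOmega_R$, but the task is the intersection $\Gamma_\phi\cap\Omega_R$, so what actually matters is its restriction to $\Omega_R$, where the factorization through $c_\X$ is immediate from Definition~\ref{def:state-trace-ltl}. No properties of \LTL beyond the fact that satisfaction is a property of the propositional sequence are needed, so the same argument would in fact work for any specification logic whose semantics is given by a trace condenser that factors through $c_\X$. This slight generality is worth noting, but it is not required for the lemma as stated.
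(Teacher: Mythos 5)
Your proof is correct and takes essentially the same route as the paper's: the paper's key step---that two traces agreeing on their state components can differ only in actions/observations and hence cannot differ in satisfaction of $\phi$---is exactly your observation that $c_{X\prop}$ restricted to $\Omega_R$ factors through $c_\X$, and the remaining preimage bookkeeping is identical. Your version merely makes the factorization explicit and argues the reverse inclusion directly rather than by contradiction, which is a slightly cleaner presentation of the same idea.
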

\vspace*{-0.6ex}
\begin{proof}
The state-trace \LTL formula $\phi$
describes the set $\Gamma_\phi  = \{\omega \in \UOmega_R\, |\, \langle c_{X\prop}(\omega) ,\initt \rangle \models \phi \}$,
and expresses the task $T_\phi = \Gamma_\phi \cap\, \Omega_R$.
Any $s$ satisfying $\phi$, 
i.e., where $\langle s, \initt\rangle \models \phi$,
that also comes from the image of $c_{X\prop}$,
is a sequence of singleton proposition sets; there is an injection from any such sequence into
$\X$. 
The collection of sequences in $\X$ obtained via this injection, restricted to $\omega$s originating within $\Omega_R$, is just $c_\X(T_\phi)$.
If $v \in c_\X^{-1} \circ c_\X(T_\phi)$ but $v \not\in T_\phi$ then there
must be some $w \in T_\phi$ where $c_\X(v) = c_\X(w)$.
Since $v$ and $w$ agree on the state elements of the sequence, they can
only differ on either actions or observations, or both.
But any such a difference cannot cause 
$\langle c_{X\prop}(w) ,\initt \rangle \models \phi$
while $\langle c_{X\prop}(v) ,\initt \rangle \not\models \phi$, 
so $v \in \Gamma_\phi$.
Since the domain of $c_\X$ is $\Omega_R$, necessarily $v \in \Omega_R$ and, therefore,
$v \in T_\phi$.
This establishes that $c_\X^{-1} \circ c_\X(T_\phi) \subseteq
T_\phi$. 
\end{proof}

\begin{definition}
\label{def:a-o-trace-ltl}
Given the action and observation sets $U=\{u', u'', \dots, u^{(m)}\}$ and 
$Y=\{y', y'', \dots, y^{(n)}\}$,
an \emph{action/observation-trace \LTL formula} is a formula with 
$\prop = \{\prp{u'},\prp{u''},\dots, \prp{u^{(m)}}\} \sqcup
\{\prp{y'},\prp{y''},\dots, \prp{y^{(n)}}\}$,
with 
propositions $\prp{u^{(i)}} = \alpha(u^{(i)})$ and 
$\prp{y^{(j)}} = \alpha(y^{(j)})$,
and with semantics defined in the standard way over infinite sequences of
sets of propositions,  but grounded to sequences 
in $\UOmega_R$ via \gobble{the following munificent condenser,}
$c_{H\prop}: \UOmega_R \rightarrow \powSet{\prop}^\Nat$:
\vspace*{-1.6ex}
$$(x_1,u_1,y_1,x_2, u_2, y_2, \dots) \mapsto (\{\prp{u_1},\prp{y_1}\}, \{\prp{u_2},\prp{y_2}\}, \dots).%
\vspace*{-0.6ex}$$
\end{definition}

\medskip\noindent
Unlike Definition~\ref{def:state-trace-ltl}, action/observation-trace formulae
describe properties on sequences which most robots have direct access to under
realistic assumptions. Such robots, for instance, must deal with ambiguity if
sensors fail to totally resolve state; typically, robots must pick actions
which exhibit a degree of robustness to uncertainty. The formulae of
Definition~\ref{def:a-o-trace-ltl} \gobblexor{specify associated }{stipulate }requirements directly on
action/observation sequences; the area of sensor-based planning, not to
mention reactive controllers, often perform activities to achieve tasks \gobblexor{which may be specified in this form}{matching this form}.
% DAS: trying to say something and not tell any untruths (I think I
% succeeded?), but leave interesting things to say before the I-state
% definition.

\vspace*{-0.3ex}
\begin{lemma}
\label{lem:history-trace-ltl-is-H}
Every action/observation-trace \LTL formula expresses an action/observation-trace posable task.
\end{lemma}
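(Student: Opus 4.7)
The plan is to mirror the structure of the proof of Lemma~\ref{lem:state-trace-ltl-is-X}, substituting $c_\H$ for $c_\X$ and $c_{H\prop}$ for $c_{X\prop}$. I would begin by letting $\phi$ be an arbitrary action/observation-trace \LTL formula, then define $\Gamma_\phi = \{\omega \in \UOmega_R \mid \langle c_{H\prop}(\omega), \initt \rangle \models \phi\}$ and the expressed task $T_\phi = \Gamma_\phi \cap\, \Omega_R$. The goal is to verify Definition~\ref{def:wellposed} for $c_\H$, namely that $c_\H^{-1}(c_\H(T_\phi)) = T_\phi$. The inclusion $T_\phi \subseteq c_\H^{-1}(c_\H(T_\phi))$ is automatic because $c_\H$ is a function, so the real content is the reverse inclusion.

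For the reverse inclusion, I would argue by contradiction: suppose some $v \in c_\H^{-1}(c_\H(T_\phi))$ has $v \notin T_\phi$. Then there must exist $w \in T_\phi$ with $c_\H(v) = c_\H(w)$. Since $v$ and $w$ agree on every action and every observation of the trace, they can differ only on their state components. The key step is the observation that $c_{H\prop}$ ignores the state components entirely and extracts only the action/observation propositions at each stage; hence $c_{H\prop}(v) = c_{H\prop}(w)$. Because satisfaction $\langle c_{H\prop}(\cdot), \initt\rangle \models \phi$ depends only on the resulting propositional sequence, we get $\langle c_{H\prop}(v), \initt\rangle \models \phi$, i.e., $v \in \Gamma_\phi$. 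Finally, since $v$ lies in the domain of $c_\H$, which is $\Omega_R$, we conclude $v \in \Gamma_\phi \cap\, \Omega_R = T_\phi$, contradicting the assumption.

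The main (and essentially only) obstacle is making the invariance argument for $c_{H\prop}$ rigorous: one has to note explicitly that $c_{H\prop}(\omega)$, as given in Definition~\ref{def:a-o-trace-ltl}, is a stage-wise function of the action/observation pairs alone, so that $c_\H(v) = c_\H(w)$ forces $c_{H\prop}(v) = c_{H\prop}(w)$. Everything else is a purely bookkeeping transcription of the proof of Lemma~\ref{lem:state-trace-ltl-is-X}, and in fact the two lemmas could be unified into a single abstract statement: whenever the grounding condenser of an \LTL-style specification factors through a condenser $c$, the task it expresses is well-posed under $c$.
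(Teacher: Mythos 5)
Your proof is correct and is exactly what the paper intends: its own proof of this lemma is just the sketch ``proceed analogously to Lemma~\ref{lem:state-trace-ltl-is-X},'' and your transcription---replacing $c_\X$ with $c_\H$ and $c_{X\prop}$ with $c_{H\prop}$, with the key observation that $c_{H\prop}$ depends only on the action/observation components so $c_\H(v)=c_\H(w)$ forces $c_{H\prop}(v)=c_{H\prop}(w)$---is precisely that analogous argument.
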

\vspace*{-0.4ex}
\begin{proofsketch}
Proceed analogously to Lemma~\ref{lem:state-trace-ltl-is-X}.
\end{proofsketch}
\vspace*{-0.4ex}

\medskip\noindent
Robots processing streams of actions and observations may treat uncertainty in
a variety of ways.  Some cases, albeit relatively few realistic ones, are
`certainty equivalent'\,\cite[p.~76]{Ber19}, so that some surrogate state,
which although potentially not the actual state known by the system, may
suffice to be used by the robot to reason about how to act. More generally, one
may integrate data into an estimate describing what is known (or `believed')
from the history of actions and observations. In some instances this estimate,
or even reduced descriptors derived therefrom, may be sufficient for the robot
to act.  \gobblexor{Perhaps unsurprisingly,}{Unsurprisingly,} one may wish to specify tasks for the robot in
terms of these estimates and this motivates the \gobblexor{definition that follows.}{following.}

\begin{definition}
\label{def:desc-Istate-trace-ltl}
Given state set $X=\{x', x'', \dots, x^{(m)}\}$, 
and an information space $\ifs=\{\is', \is'', \dots, \is^{(m)}\}$, 
an \emph{I-state--trace \LTL formula} is a formula with 
$\prop = \{\prp{x'},\prp{x''},\dots, \prp{x^{(m)}}\}$,
with each proposition $\prp{x^{(i)}} = \alpha(x^{(i)})$ being a plausible true state consistent with the data received so far, and with
semantics defined in the standard way over infinite sequences of sets of
propositions,  but grounded to sequences 
in $\UOmega_R$ via \gobble{the following munificent condenser,}
$c_{I\prop}: \UOmega_R \rightarrow \powSet{\prop}^\Nat$:
\vspace*{-1.6ex}
$$(x_1,u_1,y_1,x_2, u_2, y_2, \dots) \mapsto (I_1, I_2, \dots),%
\vspace*{-0.6ex}$$
%where $\{ \prp{x'} \mid x' \in \is_n\} \eqqcolon I_n \subseteq \prop$, with $\is_n$ from \eqref{eq:is_ndet}.
where $I_n \coloneqq \{ \prp{x'} \mid x' \in \is_n\}$, 
 with $\is_n$ from Equation~\eqref{eq:is_ndet}.
\end{definition}

\gobble{A few remarks are in order.} The reader may have expected a definition
directly analogous
to the previous two, i.e., over symbols $\prp{\iS_n} = \alpha(\is_n)$.
Such a definition is indeed
possible,\gobble{\footnotemark{}} but the present definition has two \gobble{specific }merits:
First, it is more economical in that \LTL's semantics already permits sets of
propositions to hold and, while the previous two definitions have not had 
cause to make use of this facility, the I-states \gobble{naturally do so most
effectively.}{do so naturally.}  Secondly, it emphasizes the vital conceptual point that two
formulae written in terms of propositions corresponding to states (like
$\prp{x'}$) will be indistinguishable at their surface level: whether the
formulae represent statements about the ground-truth state or the robot's
belief of a feasible state depends entirely on how the 
propositions are grounded to~$\UOmega_R$.

\gobble{
\footnotetext{One simply forms, mutatis mutandis, the
munificent condenser,
$(x_1,u_1,y_1,x_2, u_2, y_2, \dots) \mapsto (\{\prp{\iS_1}\}, \{\prp{\iS_2}\}, \dots).$
%where proposition $\prp{\iS_n}$ corresponds to the $\is_n$ via \eqref{eq:is_ndet}.
Formulae may be expressed over such propositions. Any formula
in terms of $\prp{\iS}$ propositions can be turned into an equivalent one
of the form in Definition~\ref{def:desc-Istate-trace-ltl} and vice versa.
The equivalence is as follows:
any $\prp{\iS_n} \equiv \big(\bigwedge_{x_\ell \in \is_n} \prp{x_\ell}\big)  \wedge \big(\bigwedge_{x_k \in X\setminus\is_n}\!\!\!\lnot\prp{x_k}\big)$,
and similarly any $\prp{x_n} \equiv \big(\bigwedge_{\is_j \in \{\is \in \ifs \,|\, x_n \in \is\}} \prp{\iS_j}\big)
\wedge \big(\bigwedge_{\is_m \in \{\is \in \ifs \,|\, x_n \not\in \is\}} \!\lnot\prp{\iS_m}\big)$. \label{foot:transformation}
}
}

\begin{lemma}
\label{lem:istate-trace-ltl-is-I}
Every I-state--trace \LTL formula expresses an \mbox{I-state}--trace posable task.
\end{lemma}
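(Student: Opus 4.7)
The plan is to follow the structure of the proof of Lemma~\ref{lem:state-trace-ltl-is-X}, leveraging the fact that the grounding condenser $c_{I\prop}$ factors through the nondeterministic I-state condenser $c_\ndet$. Given an I-state--trace \LTL formula $\phi$, set $\Gamma_\phi = \{\omega \in \UOmega_R \mid \langle c_{I\prop}(\omega), \initt \rangle \models \phi\}$ and $T_\phi = \Gamma_\phi \cap\, \Omega_R$; the target is $c_\ndet^{-1}(c_\ndet(T_\phi)) = T_\phi$.

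The central observation is that, on the domain $\Omega_R$, $c_{I\prop}$ depends on its argument only through $c_\ndet$. Explicitly, define the relabeling map $\beta : \ifs^\Nat \to \powSet{\prop}^\Nat$ by $\beta(\is_1, \is_2, \dots) = (\{\prp{x'} \mid x' \in \is_1\}, \{\prp{x'} \mid x' \in \is_2\}, \dots)$. Comparing Definition~\ref{def:desc-Istate-trace-ltl} with Equation~\eqref{eq:is_ndet} shows immediately that $c_{I\prop}(\omega) = \beta(c_\ndet(\omega))$ for every $\omega \in \Omega_R$, since both constructions produce exactly the same sequence of proposition sets. Because $\alpha : X \to \prop$ is a bijection, $\beta$ is injective; consequently $c_\ndet(v) = c_\ndet(w)$ implies $c_{I\prop}(v) = c_{I\prop}(w)$, so $v$ and $w$ satisfy precisely the same I-state--trace \LTL formulae.

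The closing step is then a direct transcription of the argument in Lemma~\ref{lem:state-trace-ltl-is-X}. The inclusion $T_\phi \subseteq c_\ndet^{-1}(c_\ndet(T_\phi))$ is immediate because $c_\ndet$ is a function. For the reverse, take any $v \in c_\ndet^{-1}(c_\ndet(T_\phi))$; then $v \in \Omega_R$ and there exists $w \in T_\phi$ with $c_\ndet(v) = c_\ndet(w)$. By the factorization, $c_{I\prop}(v) = c_{I\prop}(w)$, and since $\langle c_{I\prop}(w), \initt \rangle \models \phi$ we obtain $\langle c_{I\prop}(v), \initt \rangle \models \phi$, whence $v \in \Gamma_\phi \cap\, \Omega_R = T_\phi$.

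I do not anticipate any substantive obstacle: once the factorization $c_{I\prop} = \beta \circ c_\ndet$ on $\Omega_R$ is recognized, the rest is bookkeeping. The only mild subtlety is keeping track of domains, since $c_{I\prop}$ is defined on the larger $\UOmega_R$ while $c_\ndet$ lives on $\Omega_R$; intersecting with $\Omega_R$ in the definition of $T_\phi$ makes this harmless. This same factorization pattern is what distinguishes the present lemma conceptually from Lemma~\ref{lem:state-trace-ltl-is-X}: here the relevant equivalence on traces is agreement of nondeterministic I-states rather than agreement of state sequences, exactly as the remark following Definition~\ref{def:desc-Istate-trace-ltl} foreshadows.
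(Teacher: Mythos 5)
Your proof is correct and follows essentially the same route as the paper, which simply sketches this lemma as ``similar to Lemma~\ref{lem:state-trace-ltl-is-X}''; your factorization $c_{I\prop} = \beta \circ c_\ndet$ on $\Omega_R$ is exactly the right way to make that sketch precise. One trivial remark: injectivity of $\beta$ is not actually needed---the implication $c_\ndet(v) = c_\ndet(w) \Rightarrow c_{I\prop}(v) = c_{I\prop}(w)$ follows from the factorization alone.
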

\vspace*{-0.5ex}
\begin{proofsketch}
Again, similar to that of Lemma~\ref{lem:state-trace-ltl-is-X}.
\end{proofsketch}

\vspace*{-0.0ex}
\begin{corollary}
\label{cor:no-formulae}
There are no state-, action/observation-, or I-state--trace \LTL formulae for 
any tasks which are not state-, action/observation-, or I-state--trace posable, respectively.
%There is no state-trace \LTL formula for any task which is not state-trace posable.
%There is no action/observation-trace \LTL formula for any task which is not action/observation-trace posable.
%There is no I-state--trace \LTL formula for any task which is not \mbox{I-state}--trace posable.
\end{corollary}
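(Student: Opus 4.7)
The plan is to observe that this corollary is simply the contrapositive of the three preceding lemmas, taken together. Each of Lemmas~\ref{lem:state-trace-ltl-is-X}, \ref{lem:history-trace-ltl-is-H}, and \ref{lem:istate-trace-ltl-is-I} has the form ``every \LTL formula of type $\tau$ expresses a task posable under the condenser matching $\tau$''. Contrapositively, if a task $T$ is not posable under that condenser, then no \LTL formula of type $\tau$ can express it.

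More explicitly, I would argue by contradiction for each of the three cases in turn. Suppose, for instance, that $T \subseteq \Omega_R$ is not state-trace posable, yet some state-trace \LTL formula $\phi$ has $T_\phi = T$. Then, by Lemma~\ref{lem:state-trace-ltl-is-X}, $T_\phi$ is state-trace posable, i.e., $c_\X^{-1}(c_\X(T_\phi)) = T_\phi$. Substituting $T_\phi = T$ gives $c_\X^{-1}(c_\X(T)) = T$, contradicting the assumption that $T$ fails to be state-trace posable. The same argument, with $c_\H$ and Lemma~\ref{lem:history-trace-ltl-is-H} in place of $c_\X$ and Lemma~\ref{lem:state-trace-ltl-is-X}, handles the action/observation-trace case; similarly $c_\ndet$ and Lemma~\ref{lem:istate-trace-ltl-is-I} handle the I-state--trace case.

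Since this is purely a contrapositive restatement, there is no real obstacle to overcome: all the technical content already resides in the three lemmas. The only thing to be careful about is that each implication is applied with its own matching condenser and matching notion of posability (they are not interchangeable across the three cases), so I would keep the three arguments separate rather than attempting a single unified step. The proof, as written, is essentially a one-line appeal to the conjunction of the three lemmas.
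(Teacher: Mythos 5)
Your proposal is correct and matches the paper's own argument, which proves the corollary precisely as the contrapositive of Lemmata~\ref{lem:state-trace-ltl-is-X}, \ref{lem:history-trace-ltl-is-H}, and \ref{lem:istate-trace-ltl-is-I}, applied case by case. Your explicit contradiction-style spelling out of each case is just a more detailed rendering of the same one-line appeal.
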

\vspace*{-0.5ex}
\begin{proofsketch}
Contrapositives of Lemmata~\ref{lem:state-trace-ltl-is-X}, \ref{lem:history-trace-ltl-is-H}, \ref{lem:istate-trace-ltl-is-I}.
\end{proofsketch}

\vspace{-0.3em}
\subsection{Relationships}\label{sec:subsec_relations}
\vspace{-0.3em}

\begin{definition}[Equivalent \LTL formulae]
Any two \LTL formulae $\psi$ and $\phi$ are \emph{equivalent for robot
system $R$} if and only if 
they specify the same task, i.e., 
$\Gamma_\psi \cap\, \Omega_R = \Gamma_\phi \cap\, \Omega_R$.
%Two \LTL formulae provide equivalent expressions if and only if they are consistent with identical sets of state-traces.
\end{definition}

\begin{example}\label{eg:Table_1_1_1} 
Refer back to Example~\ref{eg:Istate_task} which presented a task that was
state-trace, action/observation-trace, and I-state--trace poseable. This task
is also expressible in the respective domains with respective \LTL formulae.
%BS:Always eventually x_G, \{x_G\}, y_G?
\end{example}

\begin{lemma}
\label{lem:i-state-posable-is-UY-posable}
Any I-state--trace posable task is action/\hspace{0pt}observation-trace posable.
\end{lemma}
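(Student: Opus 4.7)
The plan is to exploit the fact that the nondeterministic I-state trace $c_\ndet(\omega)$ depends on $\omega$ only through the action/observation trace $c_\H(\omega)$. Inspecting Equation~\eqref{eq:is_ndet}, the recursion starts from the fixed $\is_0 = X_0$ and updates via $\is_k = \bigcup_{x \in \is_{k-1}} F(x, u_k, y_k)$, which only reads the current action $u_k$ and observation $y_k$. So there exists a map $g : \H \to \ifs^\Nat$ for which $c_\ndet = g \circ c_\H$ on $\Omega_R$.

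First I would state and justify this factorization as a short lemma-like observation. Then, assume $T$ is I-state--trace posable, i.e.\ $c_\ndet^{-1}(c_\ndet(T)) = T$, and take any $\omega' \in c_\H^{-1}(c_\H(T))$; by definition of preimage there is some $\omega \in T$ with $c_\H(\omega') = c_\H(\omega)$. Applying $g$ to both sides gives $c_\ndet(\omega') = g(c_\H(\omega')) = g(c_\H(\omega)) = c_\ndet(\omega)$, hence $\omega' \in c_\ndet^{-1}(c_\ndet(T)) = T$. The reverse inclusion $T \subseteq c_\H^{-1}(c_\H(T))$ is immediate, giving $c_\H^{-1}(c_\H(T)) = T$ as required.

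There is essentially no hard step: the entire content is the observation that the nondeterministic I-state update depends only on the action/observation pair and its previous I-state (with $\is_0$ fixed by the transition system), so $c_\H$ is a \emph{refinement} of $c_\ndet$ in the sense that equality under $c_\H$ implies equality under $c_\ndet$. A standard fact is that if $c_1 = g \circ c_2$ then every $c_1$-well-posed set is $c_2$-well-posed; the proof above is just the specialization. The only mild subtlety to flag is that $g$ is well-defined on $c_\H(\Omega_R)$ because different $\omega \in \Omega_R$ sharing the same action/observation history must, by the recursion, produce identical I-state traces; I would remark on this explicitly to avoid any confusion with the fact that $c_\H$ itself is not injective on $\Omega_R$.
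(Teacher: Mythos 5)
Your proof is correct and follows essentially the same route as the paper: the paper's argument is precisely the observation that $c_\H(v)=c_\H(v')$ implies $c_\ndet(v)=c_\ndet(v')$ because Equations~\eqref{eq:F_transition} and \eqref{eq:is_ndet} use only $X_0$ and the action/observation sequence, which is your factorization $c_\ndet = g\circ c_\H$. You merely spell out the routine deduction from that implication to $c_\H^{-1}(c_\H(T))=T$, which the paper leaves implicit.
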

\begin{proof}
% Proof: I-states are coarser: 
Given robot system $R$,
for all $v, v' \in \Omega_R$, we will have that $c_\H(v) = c_\H(v') \implies c_\ndet(v) = c_\ndet(v')$ as
the definition of $c_\ndet$, via Equations~\eqref{eq:F_transition} and \eqref{eq:is_ndet}, uses $X_0$ and then only
the sequence of $U$ and $Y$ elements, i.e., it processes any two traces equal under $c_\H$ identically. 
\end{proof}

\medskip
In its essence, the preceding shows that tasks posed in terms of I-states are no finer than
those posable in terms of actions/observations. The phrase `no finer' can be tightened
to `coarser' as the next example illustrates this can be strict.

\begin{figure}[t]
\centering
\begin{minipage}{0.5\linewidth}
\vspace{-3.2em}
    \includegraphics[trim={0cm 0 0cm 0},clip,scale=0.78]{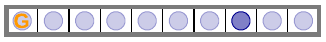}
\end{minipage}
\vspace{-1.2em}
\begin{minipage}{0.44\linewidth}
\caption{A left/right moving robot, whose initial position is entirely unknown, occupies one of  $10$ cells in a \gobble{horizontal }corridor.
    \label{fig:corridor-env}}
\end{minipage}
\vspace{-1.0em}
\end{figure}

\begin{example}\label{eg:Table_0_1_0} 
% Finite corridor with broken signed distance
Consider the corridor environment in Figure~\ref{fig:corridor-env}. The state
space is $X=\{1,2,\dots,10\}$, and the robot can start anywhere, so $X_0 = X$. 
The robot is equipped with two actions: move \underline{R}ight or \underline{L}eft, which
give $x \stackrel{\text{R}}{\mapsto} \max(x+1,10)$ and $x \stackrel{\text{L}}{\mapsto} \min(1,x-1)$, respectively. 
Suppose that the robot has a sensor which returns the distance to the left-hand wall, but
with a noisy sign, modeled via $h(x, u, x') = \{x', -x'\}$ and $Y = \{-10,\dots, 10\}
\setminus\{0\}$.  Task $T$ consists of those traces in which some non-positive
sensor reading appears (i.e., the robot obtains concrete evidence of
noise-induced mis-measurement). It is straightforward to specify in action/observation-trace \LTL.
%, $\leventu(\prp{-y_{10}} \vee \prp{-y_9} \vee \dots \vee \prp{-y_1})$, where proposition $\prp{-y_n}$ corresponds to $-n \in Y$.
But states (or even sets of states) provide no ability to discriminate between receipt of $\pm n \in Y$, so
the problem is neither state- nor I-state--trace poseable.
\end{example}

The following two examples will show that this `ordering' intuition doesn't extend
analogously to state-trace poseable (and \LTL specified) tasks. 

\begin{example}\label{eg:Table_0_1_1}
Refer again to the task $T$ in Example~\ref{eg:ao_poseable} that was shown to
be action/observation and I-state--trace poseable. It is also expressible with
respective \LTL formulae. However, $T$ is not state-trace poseable. To show
this, consider $\omega \in T$ for which \Light appears only once and the
respective state-trace $z = c_\X(\omega)$. The preimage $c^{-1}_\X(z)$ contains
also the trace $\omega'$ with \Indet instead, since that observation 
is certainly possible there as well. Therefore, $c^{-1}_\X(z) \not\subseteq T$ and
$T$ is not state-trace poseable. 
No state-trace \LTL formula exists for this task, by Corollary~\ref{cor:no-formulae}. 
\end{example}

\begin{example}\label{eg:Table_1_0_0} 
% Finite corridor without any sensor
Revisit the environment and robot in Example~\ref{eg:Table_0_1_0}, but now remove the robot's sensor.
Here, consider a task $T$ that is achieved if and only if the robot visits the \goal location at the far left of Figure~\ref{fig:corridor-env}. 
Then $T \subseteq \Omega_R$ is simply the collection of traces $\omega$ in which `$1$' appears somewhere in $c_\X(\omega)$.
In state-trace \LTL, it is specified as $\leventu \prp{x_1}$.
This task is neither action/observation-trace nor I-state--trace poseable: any
sequences where, in any prefix, the total number of `L' actions does not exceed
the total number of `R' actions so far appearing, will confuse initial
locations $1$ and $2$. The former must be in $T$, the latter ought not to be.
\end{example}

\begin{lemma}
Let $\psi_x$ be a state-trace \LTL formula specifying task $T$ for robot transition system~$R$.
Then there is an action/observation-trace \LTL formula $\psi_{u/y}$ expressing task $T$ if 
it is action/observation-trace posable.
\label{lem:state-spec-to-action-obs-spec}
\end{lemma}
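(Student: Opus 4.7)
The plan is to use action/observation-trace posability to reduce the translation problem to characterising the projected language $c_\H(T) \subseteq c_\H(\Omega_R)$ by an LTL formula over $\prop_U \sqcup \prop_Y$. Since $T = c_\H^{-1}(c_\H(T)) \cap \Omega_R$ by posability, it suffices to construct $\psi_{u/y}$ such that $\langle c_{H\prop}(\omega), \initt \rangle \models \psi_{u/y}$ iff $c_\H(\omega) \in c_\H(T)$, for every $\omega \in \Omega_R$; correctness of the resulting formula will then follow by chasing definitions.

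To build $\psi_{u/y}$, I would use an automata-theoretic product. First, convert $\psi_x$ into a Büchi automaton $\mathcal{A}_x$ with input alphabet $2^{\prop_X}$. Next, view the finite robot transition system $R$ as a nondeterministic Büchi transducer that, on reading an action/observation sequence $h$, nondeterministically emits each state-trace consistent with $h$. Forming their synchronised product yields a Büchi automaton $\mathcal{B}$ over $2^{\prop_U \sqcup \prop_Y}$ that accepts exactly those $h \in c_\H(\Omega_R)$ for which \emph{some} consistent state-trace satisfies $\psi_x$. The posability hypothesis collapses this existential condition to a universal one, so $L(\mathcal{B}) \cap c_\H(\Omega_R) = c_\H(T)$, as required.

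The main obstacle is converting $\mathcal{B}$ back into an LTL formula, since $\omega$-regular languages in general strictly exceed LTL's expressive power. The strategy I would pursue is to argue that $c_\H(T)$ is counter-free (aperiodic), then invoke the Kamp/McNaughton--Papert correspondence to extract the desired formula $\psi_{u/y}$. Counter-freeness should follow from two ingredients: the state-trace language of $\psi_x$ is counter-free because $\psi_x$ is LTL, and the saturation condition $c_\H^{-1}(c_\H(T)) \cap \Omega_R = T$ precludes the genuinely periodic behaviour that arbitrary length-preserving projections of counter-free languages may otherwise exhibit. The hardest part of the proof will be making this argument precise, because it requires analysing how the subset-style information hidden in the product interacts with aperiodicity; an alternative route, should a fully general argument prove elusive, is to substitute each atom $\prp{x}$ appearing in $\psi_x$ by an LTL formula over $\prop_U \sqcup \prop_Y$ that characterises I-state information consistent with state $x$ being current at that stage, invoking posability to justify the substitution and reducing the residual obligation to the LTL-definability of I-state tracking under the finite $R$.
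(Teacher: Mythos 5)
Your route is essentially the paper's own: convert $\psi_x$ to a B\"uchi automaton, form a product with the finite system $R$ to get a nondeterministic B\"uchi automaton over the action/observation propositions, use posability to conclude that this automaton specifies exactly $T$, and then descend back to \LTL via the counter-freeness (aperiodicity) characterization --- the paper invokes Diekert--Gastin where you invoke Kamp/McNaughton--Papert, which is the same correspondence. One small bookkeeping remark: the identity $L(\mathcal{B}) \cap c_\H(\Omega_R) = c_\H(T)$ holds by the definition of the image alone and needs no posability; posability is consumed exactly where you first use it, namely $T = c_\H^{-1}(c_\H(T)) \cap \Omega_R$, so the ``existential collapses to universal'' remark is harmless but not where the hypothesis acts.

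The point at which you stop is precisely where the paper's sketch does its remaining work. You leave open why the product language is counter-free, correctly observing that projections of aperiodic languages need not be aperiodic; but the paper does not argue at the language level or via an interplay between posability and aperiodicity. Instead it argues structurally on the automata: choose $\autom{A}$ counter-free (possible since $\psi_x$ is an \LTL formula), and show by contradiction that a counter in the product $\autom{A'}$ would induce a counter in $\autom{A}$, since the product only pairs $\autom{A}$'s states with states of $R$ and relabels transitions by the corresponding action/observation symbols. That transfer argument is the missing ingredient in your write-up. Your fallback route --- substituting each atom $\prp{x}$ in $\psi_x$ by an action/observation formula characterizing I-state consistency with $x$ --- is not sound as stated: posability of the whole task does not license atom-by-atom substitution under temporal operators, because a state-trace formula constrains the true state at each stage, not the set of states consistent with the history (Example~\ref{eg:Table_1_0_0} is exactly a case where the two diverge).
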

\begin{proofsketch}
From $\autom{A}$, the B\"uchi automaton describing $\psi_x$, and system $R$, \gobblexor{use Construction~\ref{algo:buchi-rts-product} to form a new nondeterministic B\"uchi automaton $\autom{A'}$.
And, we then establish that $T = \Gamma_{\autom{A}} \cap\, \Omega_R = \Gamma_{\autom{A'}}\cap\, \Omega_R$.
}{take a product to form nondeterministic B\"uchi automaton $\autom{A'}$, and establish $T = \Gamma_{\autom{A}} \cap\, \Omega_R = \Gamma_{\autom{A'}}\cap\, \Omega_R$.}
Next, it remains to show that $\autom{A'}$ corresponds to some formula $\psi_{u/y}$,
as the class of B\"uchi automata is strictly larger than the class realized by
\LTL formulae. Diekert and Gastin provide
\emph{counter-freeness}\,\cite[p.~286]{diekert08first} as a characterization
suitable of nondeterministic B\"uchi automata realizable via formulae.
We establish that $\autom{A'}$ is counter-free by supposing the contrary,
and showing this would cause $\autom{A}$ to not be counter-free.
\end{proofsketch}

\begin{remark}
To collect the previous results:
if the task $T$ is action/observation-trace posable and one has a 
state-trace \LTL formula specifying $T$ (hence, via Lemma~\ref{lem:state-trace-ltl-is-X}, it is state-trace posable)
then Lemma~\ref{lem:state-spec-to-action-obs-spec} says there is an action/observation-trace \LTL formula for $T$.
Conversely, should $T$ not be action/observation-trace posable,
Corollary~\ref{cor:no-formulae} indicates that no action/observation-trace
\LTL formula will specify it.
\gobblexor{Since specifiability is narrower than posablity, it is worth observing that the result allows
non-specifiability to give conclusions about non-posablity:}{As specifiability is narrower than posablity, it is interesting that 
non-specifiability allows conclusions about non-posablity:---}
with state-trace formula $\psi_x$ specifying task $\hat{T}$, and knowledge that
no $\psi_{u/y}$ specifies $\hat{T}$, then one can conclude
task $\hat{T}$ is not action/observation-trace posable.
\end{remark}

\begin{corollary}
\label{cor:all-I-states-formula-have-UY}
Let $\psi_i$ be an I-state--trace \LTL formula specifying task $T$ for\gobble{ robot 
transition} system~$R$.  Then, there is an action/\allowbreak observation-trace \LTL formula
$\psi_{u/y}$ expressing~task~$T$.
\end{corollary}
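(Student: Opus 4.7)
The plan is to adapt the automaton-product argument of Lemma~\ref{lem:state-spec-to-action-obs-spec} to the I-state grounding. Since $\psi_i$ specifies $T$, Lemma~\ref{lem:istate-trace-ltl-is-I} gives that $T$ is I-state--trace posable, whereupon Lemma~\ref{lem:i-state-posable-is-UY-posable} gives that $T$ is action/observation-trace posable. It therefore suffices to exhibit an action/observation-trace \LTL formula $\psi_{u/y}$ with $\Gamma_{\psi_{u/y}} \cap\, \Omega_R = T$.

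First, I would convert $\psi_i$ into an equivalent B\"uchi automaton $\autom{A}$ over the alphabet $\powSet{\prop}$ used by the grounding $c_{I\prop}$. Next, construct a deterministic I-state tracker $\autom{D}$ whose states are the I-states in $\ifs$, whose initial state is $X_0$, and whose transitions follow Equation~\eqref{eq:is_ndet}; label each state $\is$ with the proposition set $\{\prp{x'} \mid x' \in \is\}$. Now form the product $\autom{A'}$ of $R$, the tracker $\autom{D}$, and $\autom{A}$, reading the alphabet $U \times Y$ (equivalently, the action/observation propositions of Definition~\ref{def:a-o-trace-ltl}), with acceptance condition inherited from $\autom{A}$. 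A run of $\autom{A'}$ on an action/observation sequence simultaneously unrolls $R$ to produce a consistent state trace, pipes the actions and observations through $\autom{D}$ to generate the I-state trace, and evaluates $\autom{A}$ on the emitted proposition sets. By construction $\Gamma_{\autom{A'}} \cap\, \Omega_R = T$, because $\autom{D}$ is deterministic from the fixed initial I-state $X_0$, so the action/observation history uniquely determines the I-state trace, and $\autom{A}$ accepts precisely the sequences satisfying $\psi_i$.

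Finally, to extract $\psi_{u/y}$ from $\autom{A'}$, invoke Diekert and Gastin's counter-free characterization~\cite[p.~286]{diekert08first} following the contrapositive scheme of Lemma~\ref{lem:state-spec-to-action-obs-spec}: assume $\autom{A'}$ contains a counter and show it would have to originate in the $\autom{A}$-component, contradicting that $\autom{A}$ is counter-free (as it was derived from an \LTL formula). Because $\autom{D}$ is deterministic and the $R$-component imposes only an existential consistency check rather than an acceptance condition, any cyclic behavior in $\autom{A'}$'s transition monoid of period greater than one must project to a corresponding period in $\autom{A}$. The main obstacle is making this projection rigorous: one must carefully lift any sequence of action/observation labels witnessing a counter in $\autom{A'}$ to compatible state and I-state trajectories, so that the cycle is indeed inherited by $\autom{A}$. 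This parallels the state-grounding case, but $\autom{D}$'s deterministic yet non-trivial dynamics demand additional bookkeeping to ensure the lifted cycle remains realizable through both the tracker and $R$.
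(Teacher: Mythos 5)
Your proposal reaches the right conclusion and begins exactly as the paper does: Lemma~\ref{lem:istate-trace-ltl-is-I} gives I-state--trace posability of $T$, and Lemma~\ref{lem:i-state-posable-is-UY-posable} then gives action/observation-trace posability. But the middle of your argument takes a genuinely different route. The paper builds no new product automaton here at all: it constructs a derived system $R' = (X', U, F', H', Y, \{X_0\})$ with $X' = \powSet{X}\setminus\{\emptyset\}$ and $F'(A,u,y)=\bigcup_{x\in A}F(x,u,y)$, so that I-state--traces of $R$ \emph{are} state-traces of $R'$, rewrites $\psi_i$ purely syntactically from $\prp{x}$-propositions into a state-trace formula over I-state propositions $\prp{\iS}$, and then invokes Lemma~\ref{lem:state-spec-to-action-obs-spec} on $R'$ as a black box. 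You instead inline the proof of that lemma for the I-state grounding, forming a product of $R$, an explicit I-state tracker $\autom{D}$, and the B\"uchi automaton $\autom{A}$ for $\psi_i$, and re-running the counter-freeness extraction. What the paper's reduction buys is that the hardest step---showing the constructed automaton is counter-free---is paid for once, inside the already-established Lemma~\ref{lem:state-spec-to-action-obs-spec}; what your route would buy, if completed, is avoiding the construction of $R'$ and the $\prp{x}$-to-$\prp{\iS}$ rewriting.

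The soft spot is the step you yourself flag as the main obstacle, and the one justification you give for it does not hold as stated: determinism of $\autom{D}$ does not make the tracker counter-free. The update of Equation~\eqref{eq:is_ndet} can cycle reachable I-states with period greater than one under a repeated action/observation word (e.g., two states swapped by an action under an uninformative sensor, with a singleton initial I-state), so a counter in $\autom{A'}$ need not ``project to a corresponding period in $\autom{A}$'' merely because $\autom{D}$ is deterministic and $R$ carries no acceptance condition. Whatever argument Lemma~\ref{lem:state-spec-to-action-obs-spec} uses to dispose of counters arising in the transition-system component of its product (where, applied to $R'$, that component is precisely the belief dynamics) would have to be reproduced for your $\autom{D}$- and $R$-components, presumably exploiting posability of $T$ rather than structural properties of the tracker. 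Until that projection argument is actually carried out, your version of the corollary rests on an unproved claim that the paper's reduction deliberately avoids having to restate.
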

\begin{proofsketch}
From $R = (X, U, f, h, Y, X_0)$ construct %robot transition 
system $R' = (X', U, F', H', Y, \{X_0\})$,
where $X' = \powSet{X}\setminus\{\emptyset\}$, and, 
using Equation~\eqref{eq:F_transition}, $F'(A, u, y) = \bigcup_{x \in A}\!F(x, u, y)$ along with suitable $H$.
Then, I-state--traces on $R$ are state-traces on $R'$, and
$T$ is action/observation-trace posable in $R'$ if and only if it is in~$R$. 
Next, transform $\psi_i$, written in terms of
$\prp{x_i}$s, into $\psi_i'$ over $\prp{\iS_j}$s
through the (purely syntactic) rewriting\gobble{ described in
footnote~\ref{foot:transformation}}.
Lemma~\ref{lem:state-spec-to-action-obs-spec} implies some 
$\psi_{u/y}'$ exists provided the task is action/observation-trace
posable for $R$ or $R'$.
Lemma~\ref{lem:i-state-posable-is-UY-posable} indicates it must be\gobble{ for $R$}.
\end{proofsketch}

\begin{lemma}
\label{lem:uv-and-x-couple-i}
Every task that is state-trace and action/\allowbreak observation-trace posable is also I-state--trace posable.
\end{lemma}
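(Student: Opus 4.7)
The plan is to establish the nontrivial containment $c_\ndet^{-1}(c_\ndet(T)) \subseteq T$, since $T \subseteq c_\ndet^{-1}(c_\ndet(T))$ holds automatically whenever $c_\ndet$ is a function. I would pick an arbitrary $\omega \in c_\ndet^{-1}(c_\ndet(T))$ together with a witness $\omega^{*} \in T$ satisfying $c_\ndet(\omega) = c_\ndet(\omega^{*}) = (\is_1, \is_2, \ldots)$. Writing $\omega = (x_1, u_1, y_1, x_2, u_2, y_2, \ldots)$ and $\omega^{*} = (x_1^{*}, u_1^{*}, y_1^{*}, x_2^{*}, u_2^{*}, y_2^{*}, \ldots)$, by soundness of $c_\ndet$ every $x_i$ and every $x_i^{*}$ lies in the shared $\is_i$.

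The central idea is to construct an intermediate trace $\tilde\omega \in \Omega_R$ that splices the state sequence of $\omega^{*}$ together with the action/observation sequence of $\omega$: concretely, $\tilde\omega = (x_1^{*}, u_1, y_1, x_2^{*}, u_2, y_2, \ldots)$. Once such a $\tilde\omega$ is available, the two posability hypotheses chain together cleanly: $c_\X(\tilde\omega) = c_\X(\omega^{*})$ together with $\omega^{*} \in T$ forces $\tilde\omega \in T$ by state-trace posability, and then $c_\H(\tilde\omega) = c_\H(\omega)$ forces $\omega \in T$ by action/observation-trace posability. This is the payoff step that closes the argument once the splice is legitimate.

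The main obstacle is verifying that $\tilde\omega \in \Omega_R$, i.e., that $x_{i+1}^{*} \in F(x_i^{*}, u_i, y_i)$ for every $i \geq 1$ (and $x_1^{*} \in X_0$, which is immediate). From the common I-state trace we do have the identity $\is_i = \bigcup_{x \in \is_{i-1}} F(x, u_i, y_i)$ with $x_{i+1}^{*} \in \is_i$, so \emph{some} predecessor in $\is_{i-1}$ yields $x_{i+1}^{*}$ under $(u_i, y_i)$; the hard part is to show that this predecessor can always be taken to be $x_i^{*}$ itself. I expect to handle this by induction on $i$, leveraging the tightness of $c_\ndet$ --- which guarantees, for any state in an I-state, a complete trace realizing it with the agreed action/observation stream --- and selecting at each stage a predecessor that simultaneously matches $x_i^{*}$ and extends the previously constructed prefix. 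If the one-shot splice cannot be pushed through cleanly in some corner case, the backup is a zig-zag: build a sequence of traces, each differing from the next only on a single finite prefix in either its state content or its action/observation content, and pass to a limit whose state trace equals $c_\X(\omega^{*})$ and whose action/observation trace equals $c_\H(\omega)$; the posability hypotheses are then applied along this chain to propagate membership in $T$ from $\omega^{*}$ to $\omega$.
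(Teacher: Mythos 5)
Your overall route is the same as the paper's: both arguments hinge on splicing the state sequence of a trace in $T$ with the action/observation sequence of a trace sharing the same nondeterministic I-state trace, then applying state-trace posability followed by action/observation-trace posability (the paper runs this as a contradiction between some $w \in T$ and $\bar w \notin T$; you run it directly, which is an immaterial difference). The genuine gap is exactly the step you yourself flag as ``the main obstacle'' and then defer: membership of the spliced trace $\tilde\omega$ in $\Omega_R$. Neither of your proposed repairs closes it. Tightness of $c_\ndet$ only gives, for each single state $z \in \is_i$, \emph{some} complete trace passing through $z$ with the given action/observation stream; it does not let you realize the entire state sequence $c_\X(\omega^{*})$ under the stream $c_\H(\omega)$, which is what the splice requires. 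The zig-zag fares no better: because the constraints defining $\Omega_R$ are stage-local, any limit trace whose state trace is $c_\X(\omega^{*})$ and whose action/observation trace is $c_\H(\omega)$ is exactly $\tilde\omega$, so the chain can exist only if the one-shot splice was already legal.

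And the splice can genuinely be illegal, because equal I-state traces do not make one trace's action/observation stream compatible with the other's state stream. Take $X = X_0 = \{a,b\}$, a single action $u$ with $f(a,u)=f(b,u)=\{a,b\}$, and $Y=\{0,1\}$ with $h(x,u,x')=\{0\}$ if $x'=x$ and $\{1\}$ otherwise. Every trace then yields the constant I-state trace $(\{a,b\},\{a,b\},\dots)$, yet splicing the alternating state sequence $(a,b,a,\dots)$ (whose observations are all $1$) with an all-$0$ observation stream violates Equation~\eqref{eq:F_transition}, since $b \notin F(a,u,0)=\{a\}$. Worse, in this system the task $T=\{\omega : y_1 = 0\}$ is state-trace posable ($c_\X$ is injective here, as the action is unique and observations are determined by consecutive states) and action/observation-trace posable (membership depends only on $c_\H(\omega)$), but $c_\ndet$ is constant, so $c_\ndet^{-1}(c_\ndet(T))=\Omega_R \neq T$. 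So the step you left open is not a technicality recoverable by a cleverer induction: it is precisely where the paper's own sketch also leaps (``$w'$ is in $\Omega_R$ because it generates the same I-state--trace''), and without an additional hypothesis guaranteeing that the spliced trace respects $f$ and $h$, the argument---and, as this example shows, the claimed implication itself---does not go through for arbitrary finite robot transition systems.
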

\begin{proofsketch}
Suppose the contrary, then
there must be a pair of
sequences, $w \in T$ and $\bar{w}\not\in T$ that look identical under $c_\ndet$, but can be distinguished under 
$c_\H$ and $c_\X$. Consider a sequence $w'$, with the states from $w$ and the action/observations of $\bar{w}$.
Sequence $w'$ is in $\Omega_R$ because it generates the same I-state--trace as $w$ and $\bar{w}$.
But under $c_\H$, $w'$ is in $T$, while under $c_\X$ $w'$ is out.
\end{proofsketch}

\section{Punchline: Limits of specifiability}\label{sec:table}
Consider the following family of claims regarding the limits of specifiability which may be constructed by resolving the expressible/inexpressible choices in the template below:

\vspace{0.05in}
\textbf{Claims 1--7:} \quad \emph{There exist a robot transition system $R$ (with finite $X$, $U$, $Y$) and some task $T$ for which $T$ is
    \underline{(in-)expressible} in state-trace LTL formulae,
    \underline{(in-)expressible} in action/observation-trace LTL formulae, and
    \underline{(in-)expressible} in I-state--trace LTL formulae.}
\vspace{0.05in}

Eagle-eyed readers will note that all seven of these claims have already been resolved in the preceding text. % See Table~\ref{tab:thetable}.
Here is a summary of results in table form.

\newcommand*\tabnote[1]{\small{#1}}

\begin{tabular}{p{30pt}p{30pt}p{20pt}cl}
\hspace*{-5pt}{\textbf{\footnotesize State}} & 
\hspace*{-10pt}{\textbf{\footnotesize Act./Obs.}} & 
\hspace*{-10pt}{\textbf{\footnotesize I-state}} & 
\hspace*{-08pt}\textbf{\footnotesize Claim holds} & \textbf{\footnotesize Details}     \\[-1pt] \hline
\textos{0} & \textos{0}  & \textos{1}   &\hspace*{-3pt} \footnotesize{\textsc{No}}  & \tabnote{Corollary~\ref{cor:all-I-states-formula-have-UY}}\\[-2pt]
\textos{0} & \textos{1}  & \textos{0}   &\hspace*{-3pt} \footnotesize{\textsc{Yes}} & \tabnote{Example~\ref{eg:Table_0_1_0}}\\[-2pt]
\textos{0} & \textos{1}  & \textos{1}   &\hspace*{-3pt} \footnotesize{\textsc{Yes}} & \tabnote{Example~\ref{eg:Table_0_1_1}}\\[-2pt]
\textos{1} & \textos{0}  & \textos{0}   &\hspace*{-3pt} \footnotesize{\textsc{Yes}} & \tabnote{Example~\ref{eg:Table_1_0_0}}\\[-2pt]
\textos{1} & \textos{0}  & \textos{1}   &\hspace*{-3pt} \footnotesize{\textsc{No}}  & \tabnote{Corollary~\ref{cor:all-I-states-formula-have-UY}}\\[-2pt]
\textos{1} & \textos{1}  & \textos{0}   &\hspace*{-3pt} \footnotesize{\textsc{No}}  & \tabnote{Lemma~\ref{lem:uv-and-x-couple-i}}\\[-2pt]
\textos{1} & \textos{1}  & \textos{1}   &\hspace*{-3pt} \footnotesize{\textsc{Yes}} & \tabnote{Example~\ref{eg:Table_1_1_1}}\\ \hline
\end{tabular}

\vspace{0.1em}
\section{Conclusion and Outlook}\label{sec:conc}
\vspace{-0.2em}

The present contribution is fairly unusual as a robotics paper in that it deals
with robot tasks without engaging in the question of how to 
identify optimal controls, select good actions,  or find plans that allow a
system to solve such tasks: the paper is concerned only with how one might
characterize behavior satisfying the requirements of some task one might have
in mind. It is declarative in its entirety.  We do not discuss how a robot
might perform actions to realize a task---indeed, the formalism being built on
traces means, perhaps surprisingly, that no notion of the present or current
time appears at all.  It is our hope that this quizzical fact might stimulate
fruitful discussion.

\end{document}